\DeclareMathOperator*{\argmax}{arg\,max}
\newcommand{\bigoh}{\ensuremath{\mathcal{O}}}
\newcommand{\squishlist}{
 \begin{list}{$\bullet$}
  { \setlength{\itemsep}{0pt}
     \setlength{\parsep}{3pt}
     \setlength{\topsep}{3pt}
     \setlength{\partopsep}{0pt}
     \setlength{\leftmargin}{1.5em}
     \setlength{\labelwidth}{1em}
     \setlength{\labelsep}{0.5em} } }
\newcommand{\squishend}{
  \end{list}  }
 \newcommand{\attr}[1]{\texttt{\scalebox{.7}[1.0]{#1}}\xspace}
\begin{document}
\title{Human-Guided Data Exploration}

\author{Andreas Henelius}
\affiliation{%
\department{Department of Computer Science}
  \institution{Aalto University}
\streetaddress{Otaniemi}
  \city{Espoo} 
  \state{Finland}
}
\email{andreas.henelius@aalto.fi}

\author{Emilia Oikarinen}
\affiliation{%
\department{Department of Computer Science}
  \institution{Aalto University}
\streetaddress{Otaniemi}
  \city{Espoo} 
  \state{Finland}
}
\email{emilila.oikarinen@aalto.fi}

\author{Kai Puolam\"aki}
\affiliation{%
\department{Department of Computer Science}
  \institution{Aalto University}
\streetaddress{Otaniemi}
  \city{Espoo} 
  \state{Finland}
}
\email{kai.puolamaki@aalto.fi}

\renewcommand{\shortauthors}{A. Henelius et al.}

\begin{abstract}
The outcome of the explorative data analysis (EDA) phase is vital for
successful data analysis. EDA is more effective when the user
interacts with the system used to carry out the exploration. In the
recently proposed paradigm of iterative data mining the user controls
the exploration by inputting knowledge in the form of patterns
observed during the process. The system then shows the user views of
the data that are maximally informative given the user's current
knowledge. Although this scheme is good at showing surprising views of
the data to the user, there is a clear shortcoming: the user cannot
steer the process. In many real cases we want to focus on
investigating specific questions concerning the data. This paper
presents the Human Guided Data Exploration framework, generalising
previous research. This framework allows the user to incorporate
existing knowledge into the exploration process, focus on exploring a
subset of the data, and compare different complex hypotheses
concerning relations in the data. The framework utilises a
computationally efficient constrained randomisation scheme. To
showcase the framework, we developed a free open-source tool, using
which the empirical evaluation on real-world datasets was carried
out. Our evaluation shows that the ability to focus on particular
subsets and being able to compare hypotheses are important additions
to the interactive iterative data mining process.
\end{abstract}

%
%
\begin{CCSXML}
	<ccs2012>
	<concept>
	<concept_id>10002951.10003227.10003351</concept_id>
	<concept_desc>Information systems~Data mining</concept_desc>
	<concept_significance>500</concept_significance>
	</concept>
	</ccs2012>
\end{CCSXML}

\ccsdesc[500]{Information systems~Data mining}

\keywords{Data mining, Explorative data analysis, Constrained randomisation}

\maketitle

\section{Introduction}
Explorative data analysis (EDA) has a long history and an established
position in the data analysis community, dating back to the 1970s
\cite{tukey:1977}. Tools for visual EDA are designed to present data
in such a way that humans can use their innate pattern recognition
skills when exploring the data. Visual exploration of data becomes
more powerful when the user can interact with the system used to
explore the data; such systems have also been around for a long time
\cite{fisherkeller} and the issue of combining interactive
visualisations and advanced data analysis methods still remains open
\cite{vismasterbook}.

Some of the main goals of data exploration is to allow the user to
gain insight into the data and discover new, interesting aspects of
the data. The limitation of most traditional systems is, however, that
they do not allow the user to incorporate his or her background
knowledge into the data analysis workflow. Recently, a new paradigm
for \emph{iterative data mining} has been proposed
\cite{hanhijarvi:2009,debie:2011a,debie:2011b,debie2013}, in which
the user iteratively explores the data, inputting his or her knowledge
in the form of patterns during exploration. The identified patterns
are taken into account in the further iterative data exploration
steps. The iterative data mining approach has also been successfully
applied in visual EDA \cite{puolamaki:2016,kang:2016b,puolamaki2017}.

In the iterative data mining framework, the knowledge that the user
has about the data is modelled through a probability distribution (the
\emph{background distribution}), over the datasets conforming to the
user's knowledge. As the user learns more about the data, new
knowledge is incorporated as patterns that constrain the distribution
of possible datasets. Initially it is assumed that the user has no
knowledge of the data, in which case the starting distribution is
uninformative, e.g., a maximum entropy distribution or an empirical
distribution obtained by permutations. In this work we use as a
background distribution a uniform distribution over all permutations
over the data set. The user inputs knowledge to the distribution in
the form of constraints \cite{lijffijt2014}, formalised here as tiles.

\begin{figure*}[t]
  \centering
  \begin{tabular}{ccc}
\begin{tabular}{c}\includegraphics[width=0.28\textwidth]{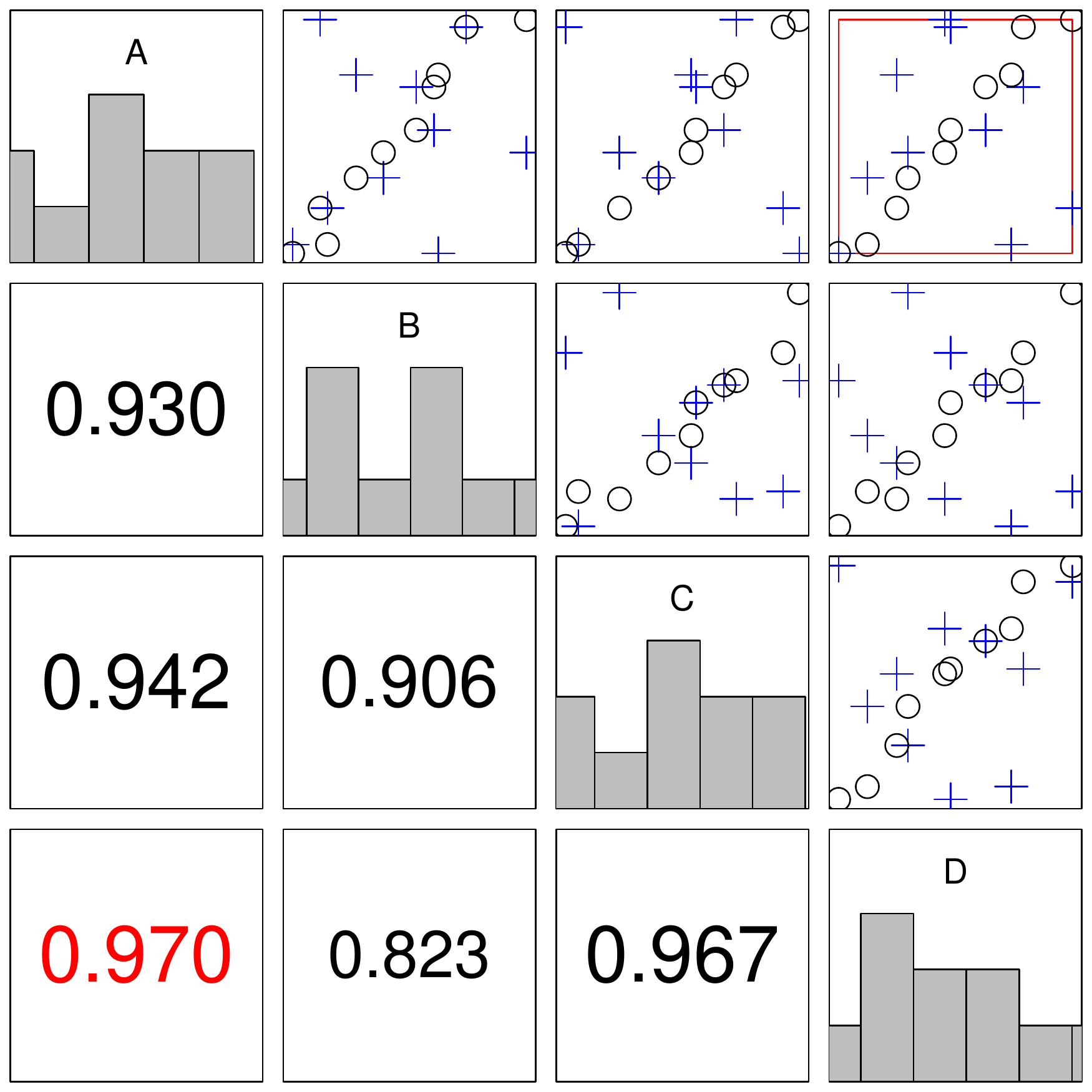}\\(a)\end{tabular}&
\begin{tabular}{c}\includegraphics[width=0.28\textwidth]{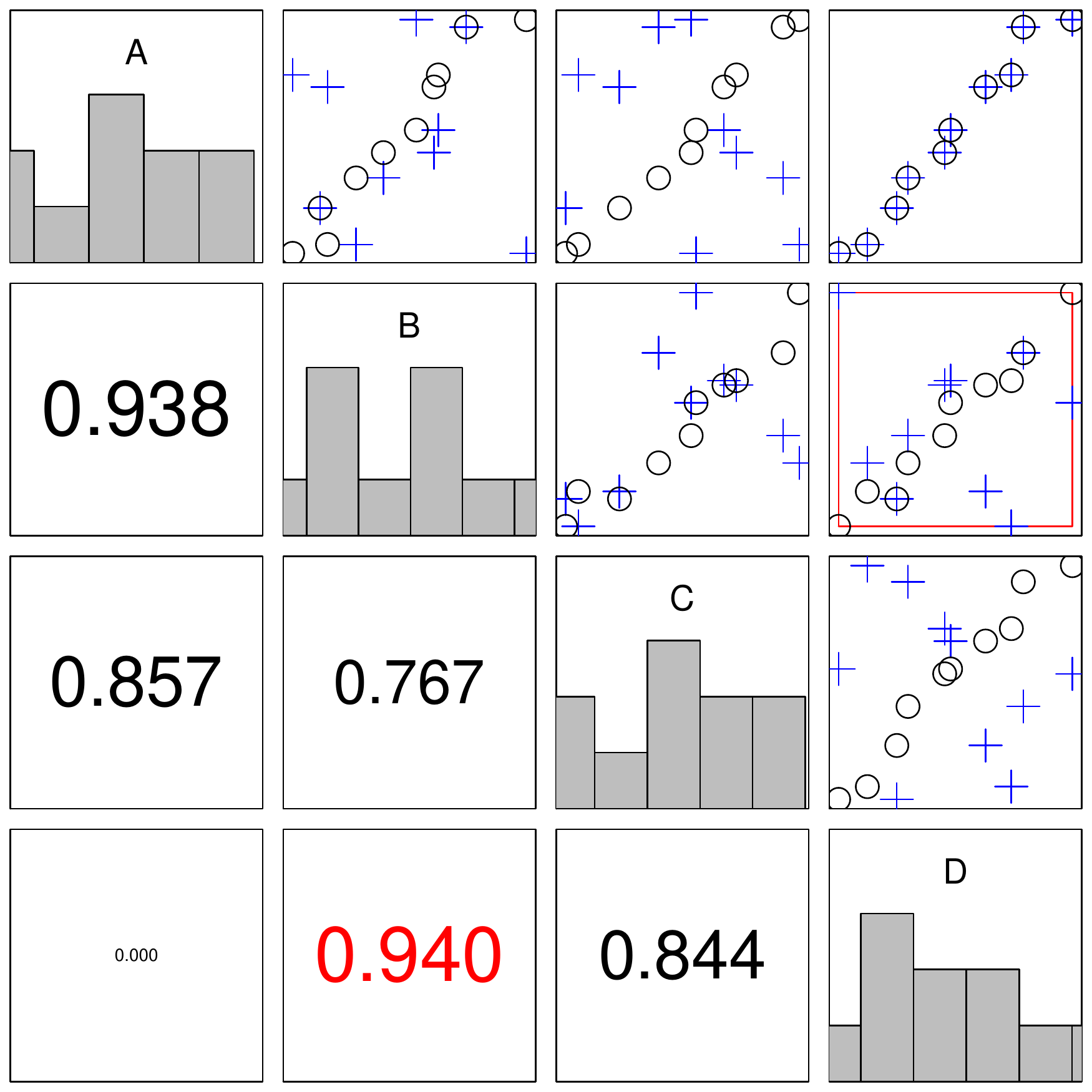}\\(b)\end{tabular}&
\begin{tabular}{c}\includegraphics[width=0.28\textwidth]{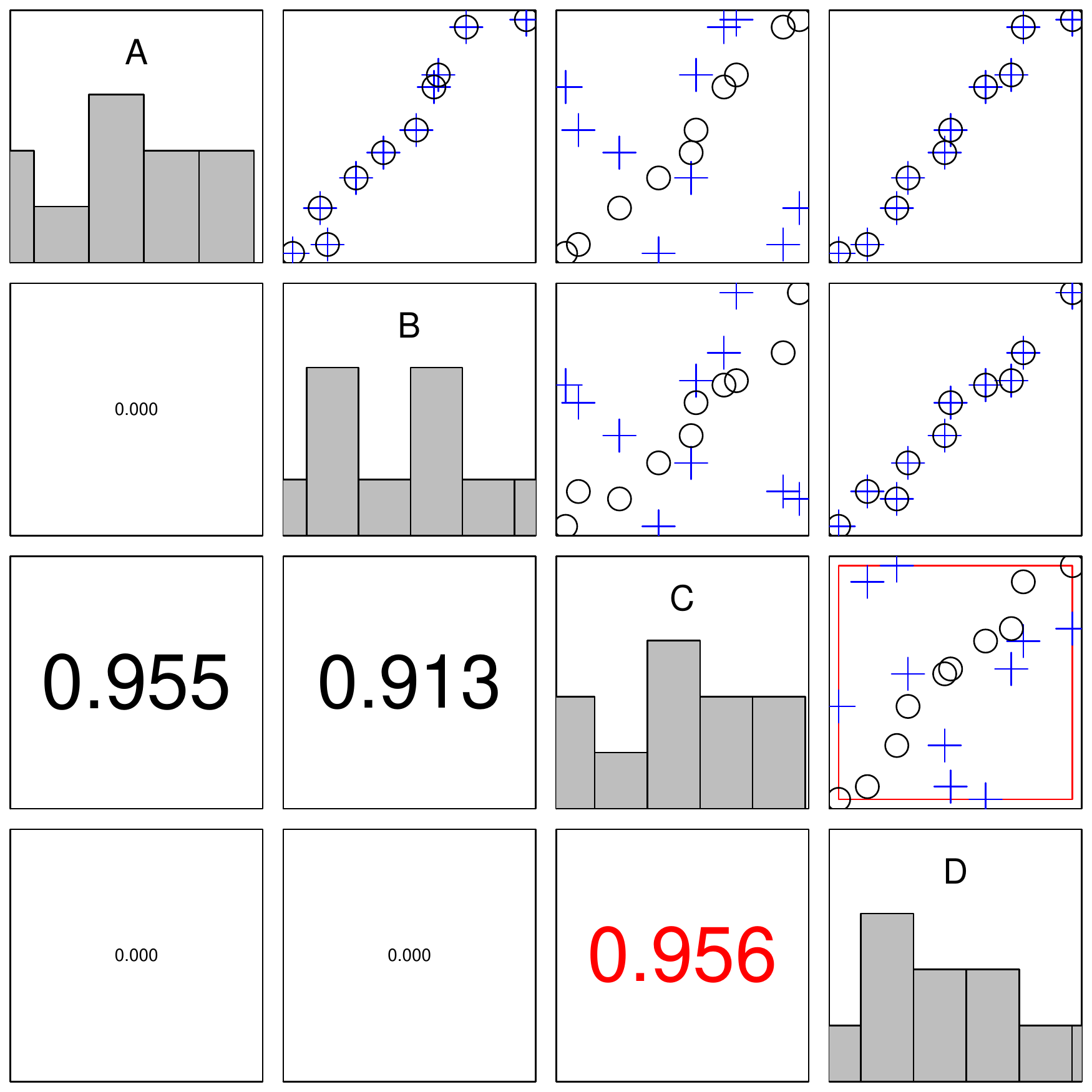}\\(c)\end{tabular}\\\\
\begin{tabular}{c}\includegraphics[width=0.28\textwidth]{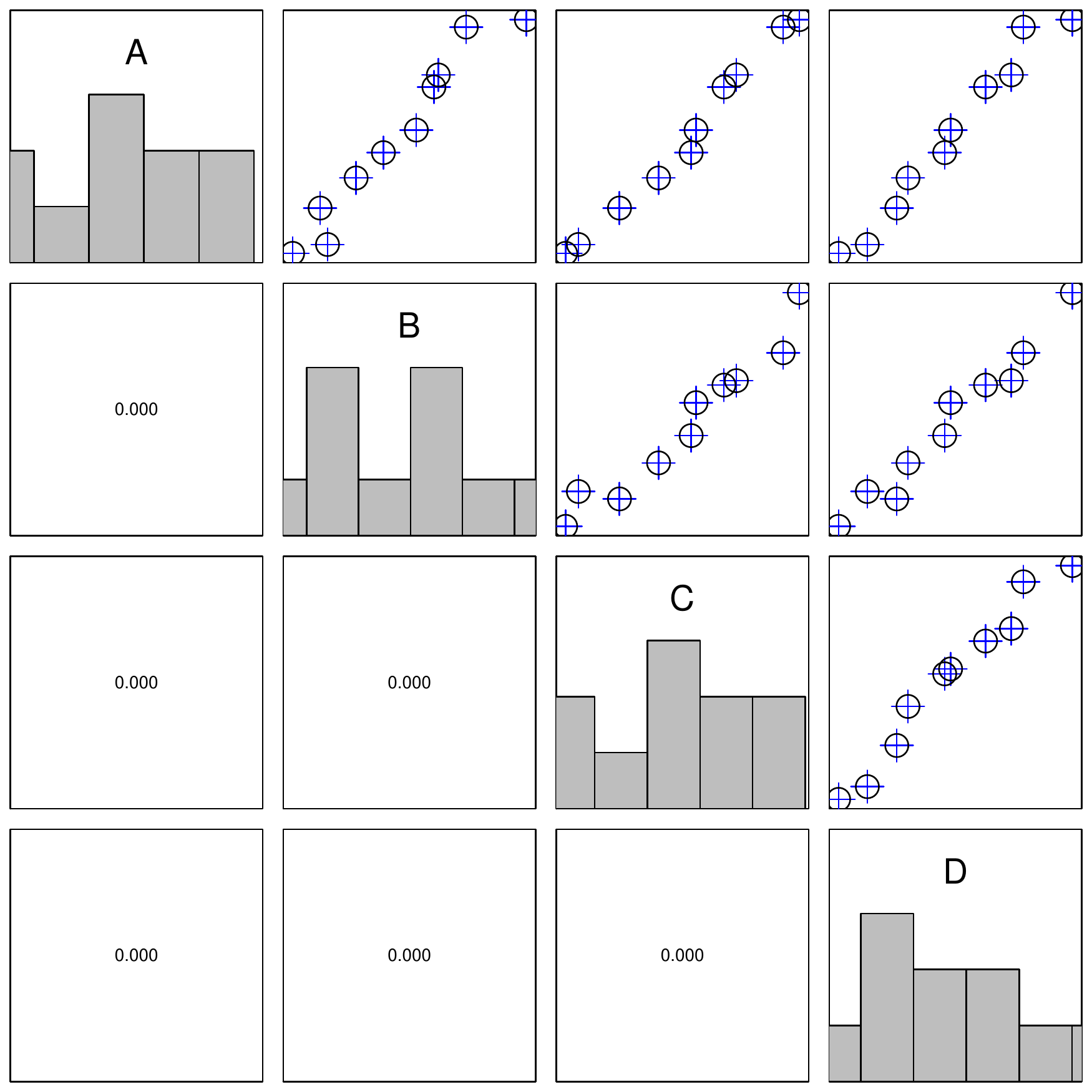}\\(d)\end{tabular}&
\begin{tabular}{c}\includegraphics[width=0.28\textwidth]{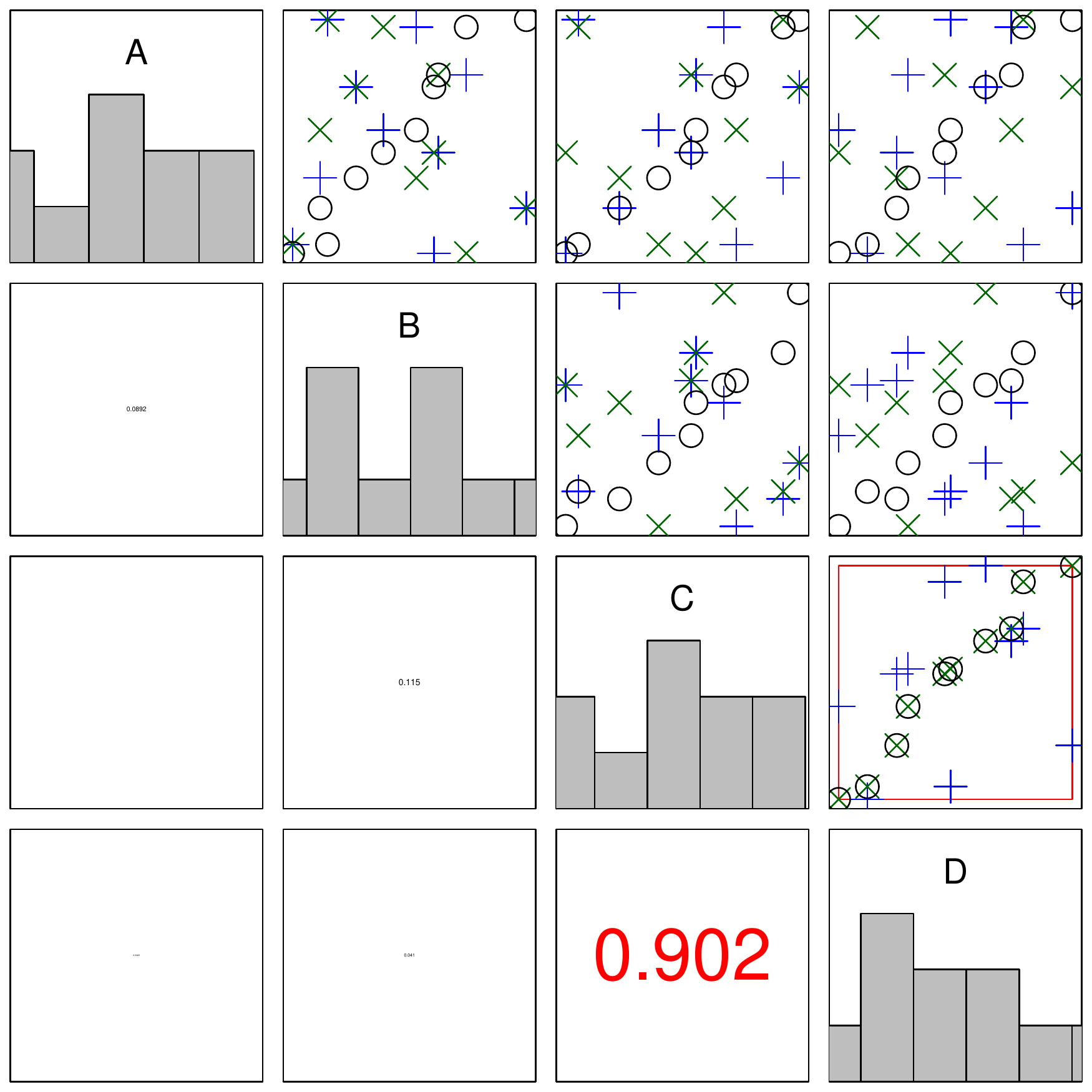}\\(e)\end{tabular}&
\begin{tabular}{c}\includegraphics[width=0.28\textwidth]{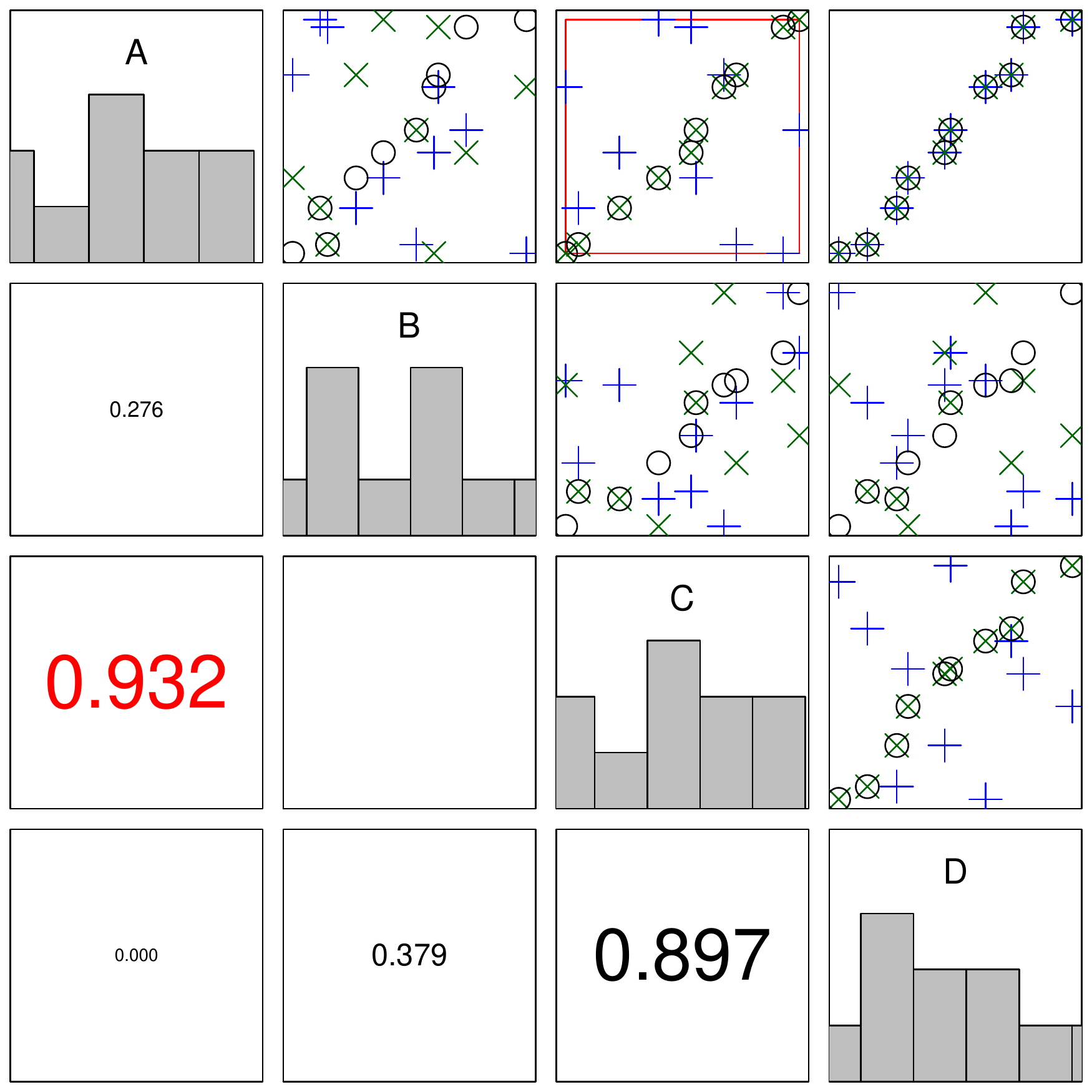}\\(f)\end{tabular}
\end{tabular}
  \caption{Example of visual data exploration with a toy data of 10 items. See the text for discussion.}
    \label{fig:usermodel}
\end{figure*}
\vspace{0.5\baselineskip}\noindent \textbf{Example of iterative data
  mining} Consider the process in Fig.~\ref{fig:usermodel} showcasing
a simple example of exploring a 4-dimensional data set of 10 items
with attributes given by $A$, $B$, $C$, and $D$, respectively. The
data set---shown with black hollow circles---exhibits a simple linear
correlation structure in all dimensions. Initially, however, the user
is unaware of the correlation structure and in his or her
\emph{background model} the attributes are uncorrelated. A sample of
the initial background model obtained by permuting all columns
independently at random, is shown with blue plus-signs in
Fig.~\ref{fig:usermodel}a. The goal of the system is to present the
user with a \emph{maximally informative} view of the data given the
user's current knowledge, i.e., the view in which the data and the
background distribution differ the most. In this example, the possible
views are given by 6 scatterplots over pairs of attributes ($AB$,
$AC$, etc.).

In our example $AD$ is the most informative view. As the measure of
informativeness in this example, we used the difference of the linear
correlation between the actual data and the background distribution:
if the user thinks two attributes are unrelated and observes that they
are in fact correlated, it is both surprising and interesting to the
user. The system presents the view $AD$, leading the user to update
his or her internal model accordingly as shown in
Fig.~\ref{fig:usermodel}b, where columns $A$ and $D$ have been
permuted together with the same permutation. Now, $BD$ is the most
informative view. This is again absorbed by user, leading to
Fig.~\ref{fig:usermodel}c where columns $A$, $B$, and $D$ are permuted
together with the same permutation. After the user has absorbed the
information in view $CD$, he or she has assimilated all relations in
the data after three views and the data and the background model match
perfectly, as shown in Fig.~\ref{fig:usermodel}d.\footnote{The three
  views make sense in this example: by observing the structure in the
  three views $AD$, $BD$, and $CD$, the user can conclude that there
  must be correlation also in the unseen views $AB$, $AC$, and $BC$.}
Note that permuting the data (in contrast to other methods of
modelling the data) preserves the marginal distributions in the
columns, as evidenced by the histograms on the diagonals in
Fig.~\ref{fig:usermodel}.

\vspace{0.5\baselineskip}\noindent \textbf{Human-guided data exploration}
A clear shortcoming in the above described iterative data mining
process is the fact that it \emph{cannot be guided by the user}, since
the system always tries to show the user the view in which the
background distribution differs the most from the actual data. In
other words, the user can only give feedback in the form of known or
observed patterns, but the user cannot direct the exploration process
to answer specific questions and the results are unpredictable,
because---by definition!---the user is a priori unaware of which parts
of the data that differ the most from his or her assumptions.

The fundamental question is then: \emph{how can the user focus the
  search on a specific hypothesis of his or her choice?} Our insight
is, instead of comparing the background distribution and the observed
data, to compare two different hypothesis distributions both formed
using the same background distribution, find the views where the
hypothesis distributions differ most, and show how the data behaves in
these views. In this way the user can establish which of the
hypothesis is broken and by which data patterns (which the user can
then add to the background knowledge).  For example, the user might
want to study whether two groups of attributes are independent. To
this purpose the system would find views that most prominently show
the differences between the two hypotheses, taking the already
observed patterns into account (i.e., the user would not need to see
the same information twice).

Consider again the toy example in Fig.~\ref{fig:usermodel}. Let us
assume that the user is interested only in the nature of the relation
between attributes $C$ and $D$. A natural choice is to take the
current background distribution and formulate two new distributions:
one in which the relation between $C$ and $D$ is fully broken and one
in which it is maintained, and then to find out which views are most
informative concerning this relation.

If the user has not yet observed the data, the user's background model
is as shown in Fig. \ref{fig:usermodel}a. We can form
\textsc{Hypothesis 2} in which the relation between $C$ and $D$ is
broken; in this case this implies the initial background model. In
\textsc{Hypothesis 1} attributes $C$ and $D$ are permuted together and
their relation is preserved. Samples from these two hypotheses are
shown in Fig. \ref{fig:usermodel}e, \textsc{Hypothesis 2} with blue
plus-signs 
and \textsc{Hypothesis 1} with green crosses. Not
surprisingly, the hypotheses differ most in the view $CD$ which the
system then shows the user. Looking at the view $CD$ the user absorbs
the fact that \textsc{Hypothesis 2} is visually quite different from
the data and hence he or she absorbs the relation of $C$ and $D$.%
\footnote{In this work we always compare distributions over data sets
  by taking a sample from the distribution and then comparing these
  samples. Therefore, we only need to be able to compare data sets,
  substantially simplifying our task. In theory, we could also
  directly compare distributions, in our case parametrised by tiles,
  but this problem would be more difficult and specific to the
  parametrisation of the distribution.}

Assume now that the user has observed the view $AD$ and the background
model is thus as shown in Fig. \ref{fig:usermodel}b.
\textsc{Hypothesis 2} corresponds to a model where attributes $A$ and
$D$ are permuted together and $B$ and $C$ independently, and
\textsc{hypothesis 1} to a model where $A$, $C$, and $D$ are permuted
together (if $A$ and $D$ are permuted together, and $C$ and $D$ are
permuted together, then $A$, $C$, and $D$ must all be permuted
together) and $B$ independently. Fig. \ref{fig:usermodel}f shows these
two hypotheses. Now, the view $AC$ gives slightly more information
than $CD$ about the relation of attributes $C$ and $D$! The
explanation for this counterintuitive behaviour is that because the
user knows that $A$ and $D$ are correlated, then the observation of
correlation between $A$ and $C$ would reveal that $C$ and $D$ must be
correlated. Therefore the system could choose to show the user the
view $AC$ as well.

As this simple example shows, the constrained randomisation framework
can be used to guide data exploration. In the simplest case
(Fig. \ref{fig:usermodel}e) this guidance leads to predictable results
(to learn about correlation between two attributes it is useful to
look at the scatterplot of these attributes). However, if we take the
user's background knowledge into account we already obtain non-trivial
insights, as shown by the informativeness of the view $AC$ in
Fig. \ref{fig:usermodel}f. The situation becomes even more non-trivial
when the background model is more complex (here the constraint is
applied to whole views, but in practice the user might not absorb the
view as a whole but only parts of it, e.g., cluster structures or
outliers), instead of looking at relations of single variables we look
at relations between groups of variables (a single view cannot show
all relations between groups of many variables), and when the user is
focusing only on a subset of data items. As an added bonus, we show
later that the above described earlier iterative process in
Figs.~\ref{fig:usermodel}a--d is in fact a special case of our general
framework.

Our framework has three important properties. (i) It is possible to
incorporate existing and new information into the user's background
model, guiding the exploration, i.e., we maintain the property that
the system tries to provide the user with maximally informative views
of the data. (ii) It is possible to focus on exploring aspects in a
subset of the data items and attributes. (iii) The framework allows
the user to \emph{compare hypotheses} concerning the data, so that
specific questions can be answered.  Notice that while we concentrate
on two-dimensional scatterplots as the views to the data in this
paper, the framework is totally general. The data types and the views
can be anything.

\begin{figure}[t]
  \centering
  \includegraphics[width=.37\textwidth]{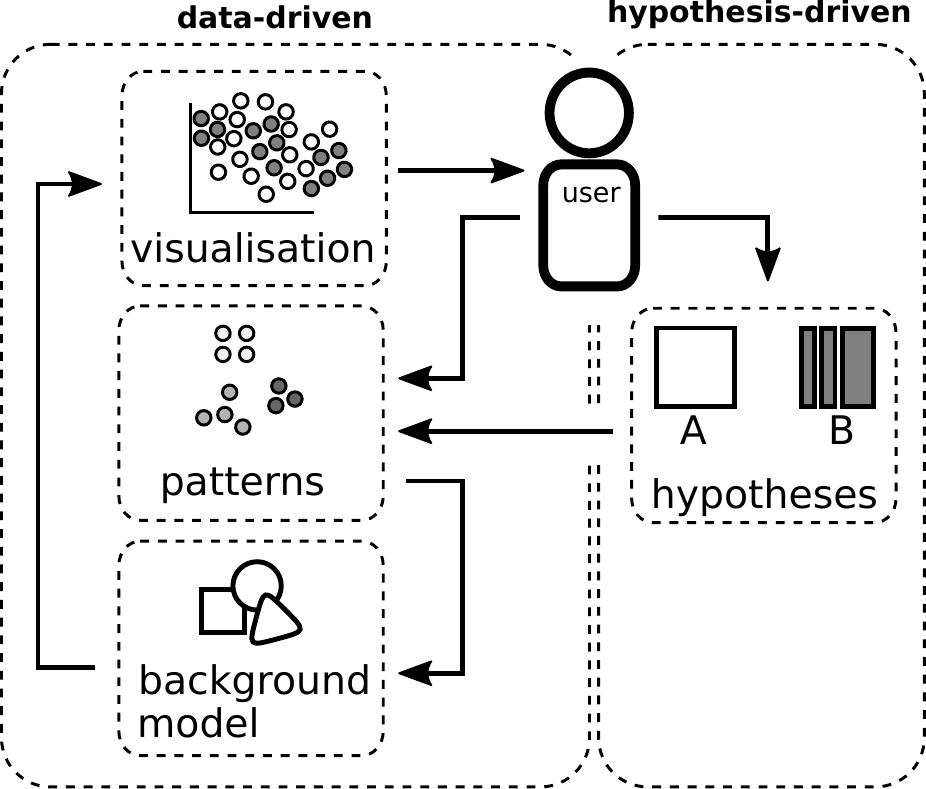}
  \caption{Human-Guided Data Exploration.}
    \label{fig:visualeda}
\end{figure}

The basic high-level workflow in the EDA framework is shown in
Fig.~\ref{fig:visualeda}. The above described iterative exploration
paradigm (\emph{explore}) is described on the left side in the figure
and is termed data-driven exploration. By augmenting the data-driven
exploration by the hypothesis-driven exploration (shown on the right
in the figure), we get the focus-oriented framework presented in this
paper.

\vspace{0.5\baselineskip}\noindent \textbf{Related work} This work is
motivated by \cite{puolamaki:2016} in which a similar system was
constructed using constrained randomisation but without user guidance,
see \cite{DLS:16,kang2016,puolamaki2017} for approaches using the
Maximum Entropy distribution. The Maximum Entropy method has been
proposed \cite{debie:2011a,debie2013} for modelling the user's
knowledge by a background distribution and it has been studied in the
context of dimensionality reduction and EDA
\cite{DLS:16,kang2016,puolamaki2017}.  To the best of our knowledge,
the framework presented in this paper is the first instance in which
this background distribution can be updated both with direct
interaction \emph{and} user guidance, thus providing a principled
method of EDA.

 Many other special-purpose methods have been developed for active
 learning in diverse settings, e.g., in classification and ranking, as
 well as explicit models for user preferences. As these approaches are
 not targeted at data exploration, we do not review them
 here. Several special-purpose methods have been developed
 for visual iterative data exploration in specific contexts, e.g., for
 item-set mining and subgroup discovery
 \cite{boley2013,dzyuba2013,vanleeuwen2015,paurat2014}, information
 retrieval \cite{ruotsalo2015}, and network analysis \cite{chau2011},
 also see \cite{vismasterbook} for a summary. 
 
 Finally, the system presented here can also be considered an instance
 of \emph{visually controllable data mining} \cite{puolamaki2010},
 where the objective is to implement advanced data analysis methods
 understandable and efficiently controllable by the user. Our approach
 satisfies the properties of a visually controllable data mining
 method \cite[Sec. II B]{puolamaki2010}: (VC1) the data and the model
 space are presented visually, (VC2) there are intuitive visual
 interactions allowing the user to modify the model space, and (VC3)
 the method is fast enough for visual interaction.

\vspace{0.5\baselineskip}\noindent \textbf{Contributions} We make the
following contributions. (i) We present a novel framework for focused
interactive iterative exploratory data analysis. The framework extends
previous research by generalising the paradigm of iterative data
mining to be user-focusable in terms of specific hypotheses specified
by the user. (ii) We present an implementation of a free open source
software tool (MIT license) realising the framework presented in this
paper. Using the tool it is possible to interactively explore data,
and all experiments in this paper are carried out using this
tool. (iii) We demonstrate the three use-cases of exploring, focusing
and comparing (with an emphasis on the latter two) using different
datasets.

\vspace{0.5\baselineskip}\noindent \textbf{Outline} In
Sec. \ref{sec:methods}, we introduce the methods, and in
Sec. \ref{sec:framework}, we present our novel framework for
exploratory data analysis.  In Sec. \ref{sec:experiments}, we present
our experimental evaluation together with an overview of our tool. We
present both investigations carried out on real datasets and an
evaluation of the scalability of our framework for EDA. We conclude
with a discussion in Sec. \ref{sec:conclusions}.

\section{Methods}\label{sec:methods}

We introduce the theory and methods related to the constrained
randomisation of data matrices using tiles in this section.

Let $X$ be an $n \times m$ data matrix (data set) sampled from a
distribution $P$. Here $X(i,j)$ denotes the $i$th element in column
$j$. Each column $X(\cdot, j),\ j \in [m]$, is an \emph{attribute} in
the dataset, where we used the shorthand $[m] = \{1, \ldots, m\}$. Let
$D$ be a finite set of domains (e.g., quantitative or categorical) and
let $D(j)$ denote the domain of $X(\cdot, j)$. Also let $X(i, j) \in
D(j) \textrm{ for all } i \in [n],\ j \in [m]$, i.e., all elements in
a column belong to the same domain, but different columns can have
different domains.

We define a \emph{permutation} $\widehat X$ of the data matrix $X$ as follows.
\begin{definition}[Permutation]\label{def:permutation}
  Let ${\mathcal P}$ denote the set of permutation functions of length
  $n$ such that $\pi:[n]\mapsto[n]$ is a bijection for all
  $\pi\in{\mathcal P}$, and denote by
  $\left(\pi_1,\ldots,\pi_m\right)\in{\mathcal P}^m$ the vector of
  column-specific permutations. A permutation of the data matrix $X$
  is then given as $\widehat X(i,j)=X(\pi_j(i),j)$.
\end{definition}
We express the relations in the data matrix $X$ by \emph{tiles}.
\begin{definition}[Tile]
A tile is a tuple $t = (R, C)$, where $R \subseteq [n]$ and $C
\subseteq [m]$.
\end{definition}
The tiles considered here are combinatorial (in contrast to
geometric), meaning that rows and columns in the tile do not need to
be consecutive. In our illustrations in this paper we, however, 
use geometric tiles for clarity of presentation. In an unconstrained
case, there are $(n!)^m$ allowed vectors of permutations. The tiles
constrain the set of allowed permutations as follows.
\begin{definition}[Tile constraint]
  \label{def:tileconstraint}
  Given a tile $t = (R,C)$, the vector of permutations
  $\left(\pi_1,\ldots,\pi_m\right)\in{\mathcal P}^m$ is allowed by $t$
  iff the following condition is true for all $i\in[n]$, $j\in[m]$,
  and $j'\in[m]$:
\begin{equation}
i\in R\wedge j\in C\wedge j'\in C\implies
\pi_j(i)\in R\wedge \pi_j(i)=\pi_{j'}(i).
\end{equation}
Given a set of tiles $T$, a set of
permutations is allowed iff it is allowed by all $t\in T$.
\end{definition}
A tile defines a subset of rows and columns, and the rows in this
subset are permuted by the same permutation function in each column in
the tile. In other words, the \emph{relations between the columns
  inside the tile are preserved}.  Notice
that the identity permutation is always an allowed permutation.

\begin{figure}[t]
  \centering
  \includegraphics[width=.5\textwidth]{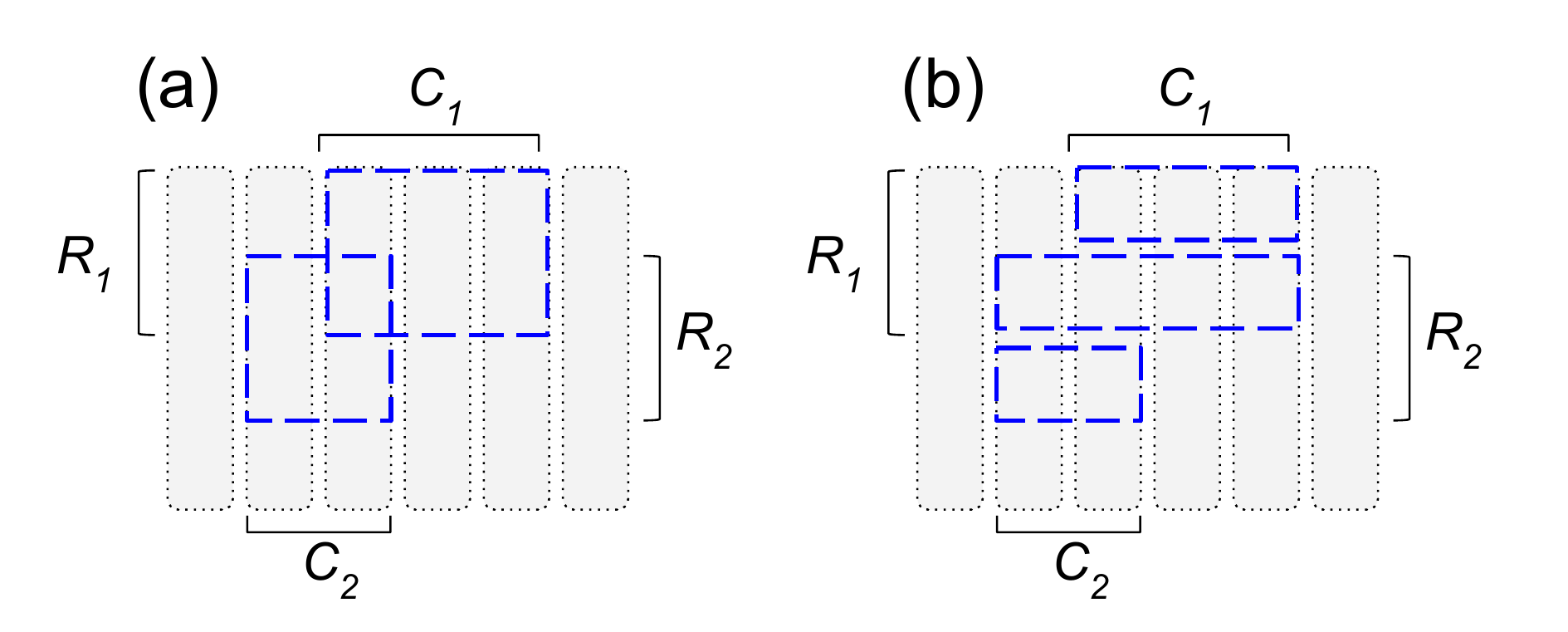}
  \caption{(a) A set of
    tiles, and  (b) an  equivalent tiling.     \label{fig:mergingtiles}}
\end{figure}

Our main problem is to draw samples from the uniform distribution over
allowed permutations, defined as follows.
\begin{definition}[Sampling problem]
\label{def:samplingproblem}
Given a set of tiles $T$, the sampling problem is to draw samples
uniformly at random from vectors of permutations in ${\mathcal P}^m$
such that the vectors of permutations are allowed by $T$.
\end{definition}
The sampling problem is trivial when the tiles are
non-overlapping. However, in the case of overlapping tiles, multiple
constraints can affect the permutation of the same subset of rows and
columns and this issue must be resolved. To this end, we need to
define the equivalence of two sets of tiles.
\begin{definition}[Equivalence of sets of tiles]
\label{def:equivalence}
  Let $T$ and $T'$ be two sets of tiles. Let $P\subseteq
  \mathcal{P}^m$ be the vector of permutations allowed by 
  $T$ and $P'\subseteq \mathcal{P}^m$ be the vector of permutations
  allowed by  $T'$, respectively. We say that $T$ is
  \emph{equivalent} to $T'$ iff $P=P'$.
\end{definition}
Equivalence of sets of tiles hence means that the same constraints are
enforced on the permutations.

We say that a set of tiles $\mathcal{T}$ such that no tiles
overlap, is a \emph{tiling} (we denote tilings using calligraphic
letters).
Next, we show that there always exists a tiling equivalent to a set of
tiles.
\begin{theorem}[Merging of tiles]
Given a set of (possibly overlapping) tiles $T$, there exists a tiling $\mathcal{T}$ that 
 is equivalent to $T$.
\end{theorem}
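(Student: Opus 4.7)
The plan is a constructive proof by iterated merging of overlapping tile pairs; I will preserve the set of allowed permutation vectors at each step and show termination via a strictly decreasing potential. Given two overlapping tiles $t_1 = (R_1, C_1)$ and $t_2 = (R_2, C_2)$ in $T$ (meaning $R_1 \cap R_2 \neq \emptyset$ and $C_1 \cap C_2 \neq \emptyset$), I replace $\{t_1, t_2\}$ by
$$
s_1 = (R_1 \cap R_2,\, C_1 \cup C_2),\quad s_2 = (R_1 \setminus R_2,\, C_1),\quad s_3 = (R_2 \setminus R_1,\, C_2),
$$
discarding any $s_i$ with an empty row or column set. The three $s_i$ are pairwise non-overlapping because their row sets are pairwise disjoint, and since all other tiles of $T$ are left untouched, by Definition~\ref{def:equivalence} it suffices to show that $\{t_1, t_2\}$ and $\{s_1, s_2, s_3\}$ allow exactly the same vectors of permutations.

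Next I verify this local equivalence. The direction $\{s_1, s_2, s_3\} \Rightarrow \{t_1, t_2\}$ is immediate from $R_1 \cap R_2, R_1 \setminus R_2 \subseteq R_1$ (and symmetrically for $R_2$). For the reverse, fix any permutation vector allowed by $\{t_1, t_2\}$. From Definition~\ref{def:tileconstraint} and the bijectivity of each $\pi_j$, I obtain $\pi_j(R_1) = R_1$ for every $j \in C_1$ and $\pi_j(R_2) = R_2$ for every $j \in C_2$. Hence for every $j \in C_1 \cap C_2$ (non-empty by overlap) one has $\pi_j(R_1 \cap R_2) = R_1 \cap R_2$ and consequently $\pi_j(R_1 \setminus R_2) = R_1 \setminus R_2$. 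Combining these image identities with the shared-permutation constraints of $t_1, t_2$ and chaining through any fixed $j_0 \in C_1 \cap C_2$ propagates the equality $\pi_j(i) = \pi_{j_0}(i)$ and the image constraint $\pi_j(i) \in R_1 \cap R_2$ to all $j \in C_1 \cup C_2$ whenever $i \in R_1 \cap R_2$, and yields $\pi_j(i) \in R_1 \setminus R_2$ with shared image for all $j \in C_1$ whenever $i \in R_1 \setminus R_2$, and symmetrically for $R_2 \setminus R_1$. These are exactly the constraints imposed by $s_1, s_2, s_3$.

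Finally, I use the potential $\Phi(T) = \sum_{t \in T} |R_t| \cdot |C_t|$ to argue termination. A direct expansion gives
$$
|R_1 \cap R_2| \cdot |C_1 \cup C_2| + |R_1 \setminus R_2| \cdot |C_1| + |R_2 \setminus R_1| \cdot |C_2| = |R_1| \cdot |C_1| + |R_2| \cdot |C_2| - |R_1 \cap R_2| \cdot |C_1 \cap C_2|,
$$
so each merge reduces $\Phi$ by the positive integer $|R_1 \cap R_2| \cdot |C_1 \cap C_2| \geq 1$. Since $\Phi$ is a non-negative integer, the procedure terminates after finitely many merges, producing a set with no overlapping pairs---a tiling---equivalent to the original $T$. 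The main obstacle is the reverse direction of the local equivalence: specifically, showing that $s_2$'s constraint $\pi_j(i) \in R_1 \setminus R_2$ holds even for $j \in C_1 \setminus C_2$, where $t_2$ provides no direct restriction. The chaining argument through a common column $j_0 \in C_1 \cap C_2$ is what resolves this.
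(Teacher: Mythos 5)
Your proof is correct and uses exactly the same decomposition of an overlapping pair into $(R_1\cap R_2,\,C_1\cup C_2)$, $(R_1\setminus R_2,\,C_1)$, $(R_2\setminus R_1,\,C_2)$ that the paper uses. You additionally supply two details the paper leaves implicit---the chaining argument through a common column $j_0\in C_1\cap C_2$ (needed to show, e.g., that $\pi_j(i)\in R_1\setminus R_2$ even for $j\in C_1\setminus C_2$, which is the genuinely non-obvious direction of the local equivalence) and the potential-function argument $\Phi(T)=\sum_t |R_t|\cdot|C_t|$ for termination of the iterated merging---and both check out.
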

\begin{proof}
Let $t_1 = (R_1, C_1)$ and $t_2 = (R_2, C_2)$ be two overlapping
tiles, as in Fig.~\ref{fig:mergingtiles}(a). Each tile describes a set of
constraints on the allowed permutations of the rows in their
respective column sets $C_1$ and $C_2$. A  tiling $\{t'_1, t'_2, t'_3\}$  equivalent to $\{t_1,t_2\}$ is given by:
\begin{displaymath}
  t'_1 = (R_1\setminus R_2, C_1), \qquad
  t'_2 = (R_1\cap R_2, C_1\cup C_2), \qquad
  t'_3 = (R_2\setminus R_1, C_2),
\end{displaymath}
as shown in Fig.~\ref{fig:mergingtiles}(b). Tiles $t'_1$ and $t'_3$
represent the non-overlapping parts of $t_1$ and $t_2$ and the
permutation constraints by these parts can be directly met. Tile
$t'_2$ takes into account the combined effect of $t_1$ and $t_2$ on
their intersecting row set, in which case the same permutation
constraints must apply to the union of their column sets. It follows
that these three tiles are non-overlapping and enforce the combined
constraints of tiles $t_1$ and $t_2$. Hence, a tiling can be
constructed by iteratively resolving overlap in a set of tiles until
no tiles overlap.
\end{proof}
Notice that merging overlapping tiles leads to tiles with
larger column sets (wider tiles) but smaller row sets (lower
tiles). The limiting case is a fully-constrained situation where each
row is a separate tile and only the identify permutation is allowed.

\subsection{Algorithm for merging tiles}
Merging a new tile into a tiling where, by definition, all tiles are
non-overlapping, can be done efficiently using the Alg.
\ref{alg:merge}, since we only need to consider the overlap between
the new tile and the existing tiles in the tiling. This is similar to
merging statements in \cite{kalofolias:2016}. Our algorithm has two
steps and works as follows.

We first initialise a hash map (line 1). We then iterate over each row
in $R$  for tile $t=(R,C)$ (lines 2--11). Each such row represents the IDs
of the tiles with which the new tile overlaps at that row. For each
row we check if this row has been seen before, i.e., if the row is a
key in $S$ (line 4). If this is the first time this row is seen, we
initialise $S(K)$ to be a tuple (line 5). We refer to the elements in
the tuple by name as $S(K)_\mathrm{rows}$, and on line 6 we store the
current row index in the tuple. On line 7 we store the unique tile IDs
in the tuple $S(K)_\mathrm{id}$. If the row has been seen before, we
update the row set associated with these tile IDs (line 9). After this
step, $S$ contains tuples of the form (\emph{rows}, \emph{id}):
\emph{id} specifies the IDs of the tiles with which $t$ overlaps at
the rows specified by \emph{rows}.

We then proceed to update the tiling (lines 12--16). We first
determine the currently largest tile ID in use (line 12), after which
we iterate over the tuples in $S$. For each of the tuples we must
update the tiles having IDs of $S(K)_\mathrm{id}$, and we therefore
find the columns associated with these tiles on line 14. After this,
we update the ID of the affected overlapping tiles on line 15, and
increment the tile ID counter. Finally, we return the updated tiling
on line 17. The time complexity of this algorithm is $\bigoh{(n m)}$.

\begin{algorithm2e}[t!]
  \SetKwInOut{Input}{input}
  \SetKwInOut{Output}{output}

  \Input{(1) A tiling $\mathcal{T}$ ($n \times m$
    data matrix where each element is the ID of the tile
    it belongs to) \\(2) A new tile $t = (R,
    C)$.}
  \Output{The tiling $\mathcal{T}$, where  $t$ has been merged with $\mathcal{T}$.}

  $S \leftarrow \texttt{HashMap}$\;

  \For{$i \in R$} {   
    $K \leftarrow \mathcal{T}(i, C)$\;
    \uIf { $K \notin \texttt{keys}(S) $}{
      $S(K) \leftarrow \texttt{Tuple}$\;
      $S(K)_\mathrm{rows} \leftarrow \{i\}$\;
      $S(K)_\textrm{id} \leftarrow \texttt{unique}( \mathcal{T}(i, C))$\;
    }      \Else{
      $S(K)_\textrm{rows} \leftarrow  S(K)_\textrm{rows} \cup \{i\}$\;
    }
  }

  $p_\mathrm{max} \leftarrow \max(\mathcal{T}(R,C))$\;
  \For{$K \in \texttt{keys}(S)$}{
    $C' = \left\{ c \mid \mathcal{T}(S(K)_\mathrm{rows}, c) \in S(K)_\textrm{id}  \right\}$ \;
    $\mathcal{T}\left(S(K)_\textrm{rows}, C'\right) \leftarrow p_\mathrm{max} + 1$; \qquad  $p_\mathrm{max} \leftarrow p_\mathrm{max} + 1$\;
      }
  \Return{$\mathcal{T}$}

  \caption{\label{alg:merge} Merging tiles. The function
    \texttt{HashMap} denotes a hash map. The value in a hash map $H$
    associated with a key $x$ is $H(x)$ and $\texttt{keys}(H)$ gives
    the keys of $H$. The function \texttt{Tuple} creates a (named)
    tuple. An element $a$ in a tuple $w = (\mathrm{a}, \mathrm{b})$ is
    accessed as $w_\mathrm{a}$. The function \texttt{unique} returns
    the unique elements of an array.}
\end{algorithm2e}

\section{Exploration Framework}\label{sec:framework}
The task of the user in the exploratory framework shown in
Fig.~\ref{fig:visualeda} is to compare two data samples, corresponding
to different \emph{hypotheses}, and draw conclusions based on this. In
this section we present how the background model and these 
hypotheses are formed and their
relation to typical exploratory data mining tasks.

\textbf{Background model.} The user's knowledge concerning relations
in the data are described by tiles (Def.~\ref{def:tileconstraint}). As
the user views the data the user can highlight relations he or she has
absorbed by tiles. For example, the user can mark an observed cluster
structure with a tile involving the data points in the cluster and the
relevant dimensions. We denote the set of user-defined tiles by
${\mathcal T}_u$. A sample from the background model is defined by
first sampling a permutation vector uniformly from the set of
permutation vectors allowed by the tiling ${\mathcal T}_u$
(Def. \ref{def:samplingproblem}) and then obtaining the permuted data
set (Def. \ref{def:permutation}).

\textbf{Hypothesis.} The hypothesis
tilings specify the hypotheses that the user is investigating and
provide a flexible method for investigating different questions
concerning relations in the data. 
\begin{definition}[Hypothesis tilings]\label{def:hypothesis} 
Given a subset of rows $R\subseteq[n]$, a subset of columns
$C\subseteq[m]$, and a $k$-partition of the columns given by $C_1,\ldots,C_k$,
such that $C=\cup_{i=1}^k{C_k}$ and $C_i\cap C_j=\emptyset$ if $i\ne
j$, a \emph{hypothesis tiling} is given by 
$\mathcal{T}_{H_1}=\{(R,C)\}$ and
$\mathcal{T}_{H_2}=\cup_{i=1}^k{\{(R,C_i)\}}$, respectively.
\end{definition}
Because a hypothesis tiling defines a question about the data it can
be specified before observing any data points.

The hypothesis tiling defines the items and attributes of interest and
the relations between the attributes that the user is interested in.
In our general framework, the user compares samples from two
distributions, each sample corresponding to a hypothesis. These two
samples are obtained by sampling permutations consistent with
$\mathcal{T}_1=\mathcal{T}_u + \mathcal{T}_{H_1}$ and
$\mathcal{T}_2=\mathcal{T}_u + \mathcal{T}_{H_2}$, respectively. We
here use '$+$' with a slight abuse of notation to denote the operation
of merging tilings into an equivalent tiling (Alg.~\ref{alg:merge}).
\textsc{Hypothesis 1} ($\mathcal{T}_1$) corresponds to a hypothesis
where all relations in $(R,C)$ are preserved, and \textsc{hypothesis
  2} ($\mathcal{T}_2$) to a hypothesis where there are no unknown (in
addition to those relations in the background model) relations between
attributes in the partitions $C_1,\ldots,C_k$ of $C$. Views that show
differences between these two hypotheses are most informative in
describing how the data set differs with respect to these two
hypotheses.

For example, if the columns are partitioned into two groups $C_1$ and
$C_2$ the user is interested in relations between attributes in $C_1$
and $C_2$, but not in relations within $C_1$ or $C_2$. On the other
hand, if the partition is full, i.e., $k=|C|$ and $|C_i|=1$ for all
$i\in [k]$, then the user is interested in \emph{all} relations
between the attributes. In the latter case, the special case of
$R=[n]$ and $C=[m]$ indeed reduces to the unguided iterative data
mining in Figs. \ref{fig:usermodel}a--d. We define the following
\emph{special cases} of hypotheses.

\emph{Exploration} is the data-driven scenario on the left in
Fig.~\ref{fig:visualeda}, where the user explores the data in order to
discover interesting relations, without directing the search in any
way. This is accomplished by comparing the original data with a sample
corresponding to the user tiling. This is realised using the following
hypothesis tilings: $\mathcal{T}_{H_1} = \{([n], [m])\}$ and
$\mathcal{T}_{H_2} = \{([n],\{j\}) \mid j\in [m]\}$. See
Fig. \ref{fig:usermodel}a--d for an example.

\emph{Focusing} means that the user wants to explore features in the
data limited to a subset of items and attributes. As described in the
introduction, it is often beneficial to focus on the interesting
aspects in the data instead of removing the data outside one's
focus. Hence, 
this is a convenient way to focus on exploring a subset of
the data. This is realised using the following hypothesis tilings: 
$\mathcal{T}_{H_1}=\{(R,C\}$ and
$\mathcal{T}_{H_2}=\{(R,\{j\})\mid j\in C\}$, where $R$ and $C$ are the sets of rows and columns, respectively, of interest. 
See Fig. \ref{fig:usermodel}e--f for an example.

\textbf{Most informative view.} The only remaining problem to solve is
how to find a view in which the distributions defined by the two
hypotheses $\mathcal{T}_1$ and $\mathcal{T}_2$ differ the most. The
answer to this question depends on the type of data and the
visualisation chosen. For example, the visualisations or measures of
difference would be different for categorical or real valued data. For real
valued data we can use projections that mix attributes (such as
principal components). In this work we consider only views that
involve two attributes and can be shown as scatterplots over these
attributes. The problem then reduces to finding the most informative
of these views specified by a pair of attributes.

Here we use the following \emph{measure of interestingness} to choose
the maximally informative view presented to the user.
\begin{definition}[Maximally informative view]
\label{def:interestingness}
  Given two data samples $X_1$ and $X_2$, the maximally informative view
is given by columns $i$ and $j$ subject to
\begin{displaymath}
\argmax_{i,j \in [m], \; i \neq j} | \mathrm{cor}(X_1(\cdot, i), X_1(\cdot, j))^2 - \mathrm{cor}(X_2(\cdot, i), X_2(\cdot, j))^2 |,
\end{displaymath}
where $\mathrm{cor}$ denotes Pearson's correlation.
\end{definition}
While we have here chosen to use correlation due to ease of
interpretability for the user when considering two-dimensional data
projections, it should be noted that also other measures of distance
between distributions can be used.

\section{Experiments}\label{sec:experiments}
In this section we first describe a software tool showcasing our
method. We then continue with a discussion of the scalability of the
method presented in this paper. Finally, we consider use of the tool
in the exploration of real-world datasets.

\begin{figure}[t]
  \centering
  \includegraphics[width=.5\textwidth]{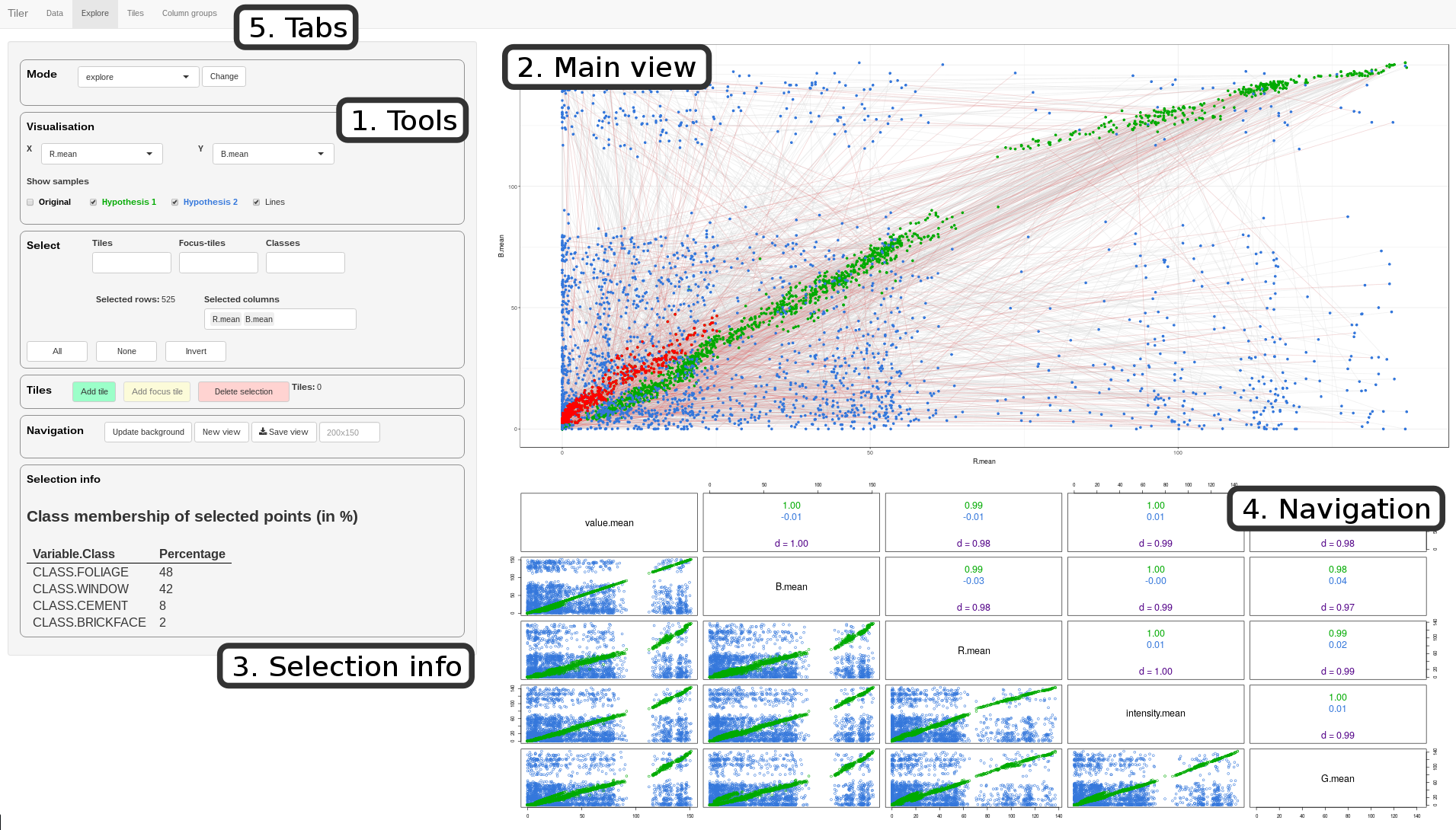}
  \caption{The main user interface of the software tool.}
  \label{fig:ui}
\end{figure}

\begin{table}[ht]
  \centering
\begin{tabular}{rrrrrr}
  \toprule
    $n$ & $m$ & $k$ & $t_\mathrm{init}$ ($s$) & $t_\mathrm{permute}$
                                                ($s$) &
                                                        $t_\mathrm{max, add}$ ($s$)\\
  \midrule
$ 5000$ & $ 50$ & $ 25$ & $0.04$ & $0.01$ & $0.06$ \\
$ 5000$ & $ 50$ & $ 50$ & $0.04$ & $0.02$ & $0.07$ \\
$ 5000$ & $ 50$ & $100$ & $0.03$ & $0.02$ & $0.08$ \\
$ 5000$ & $100$ & $ 25$ & $0.08$ & $0.03$ & $0.09$ \\
$ 5000$ & $100$ & $ 50$ & $0.08$ & $0.04$ & $0.18$ \\
$ 5000$ & $100$ & $100$ & $0.07$ & $0.03$ & $0.12$ \\
$10000$ & $ 50$ & $ 25$ & $0.07$ & $0.04$ & $0.14$ \\
$10000$ & $ 50$ & $ 50$ & $0.08$ & $0.06$ & $0.20$ \\
$10000$ & $ 50$ & $100$ & $0.07$ & $0.06$ & $0.41$ \\
$10000$ & $100$ & $ 25$ & $0.34$ & $0.10$ & $0.21$ \\
$10000$ & $100$ & $ 50$ & $0.15$ & $0.14$ & $0.34$ \\
$10000$ & $100$ & $100$ & $0.15$ & $0.07$ & $0.53$ \\
   \bottomrule
\end{tabular}
  \caption{Scalability of the algorithm. The columns show the number
    of rows ($n$) and columns ($m$) in the tiling and the number of
    user tiles ($k$). The other columns show the time in seconds to
    initialise the tiling, permute the data matrix, and the maximum
    time for adding a tile.}
  \label{tab:runningtime}
\end{table}

\subsection{Software tool}
We developed an interactive tool to demonstrate the use of the above
presented randomisation approach for exploratory data analysis of
generic data sets containing real-valued and categorical
attributes. The tool runs in a web browser and is developed in R
\cite{Rproject} using \textsc{shiny} \cite{shiny}. The tool is
released as free software under the MIT license.\footnote{Available
  from \url{https://github.com/aheneliu/tiler} together with the code
  used for the experiments.} Fig.~\ref{fig:ui} shows the main user
interface of the software tool.

The tool supports the full framework described in
Sec. \ref{sec:framework}, including the special cases
\emph{exploration} (the original iterative data mining framework) and
\emph{focusing}. The tool is interactive and selection of data items
can be made by brushing. Different hypotheses can be formed by
grouping columns. Tiles can both be added and removed and different
data projections can be chosen by the user. The tool models the two
hypotheses being compared by the user and shows the data points
belonging to samples from these models in different colours ({\sc
  Hypothesis 1} in green and {\sc Hypothesis 2} in blue). The tool
also suggests the maximally informative view
(Def.~\ref{def:interestingness}). For simplicity we here chose to use
scatterplots of attribute pairs, but it would be straightforward to
supplement it with visualisations using, e.g., projections as in
\cite{puolamaki:2016,puolamaki2017}. To help the user understand the
data better, class information of selected points (for data having
classes) is shown. Finally, a scatterplot matrix shows a few of the
most interesting variables in the data, which helps the user to
quickly get an overview of the data and to discover patterns

\subsection{Scalability}
We here investigate the performance of the tiling algorithm in terms
of wall-clock running time for interactive use. The experiments were
run on a MacBook Pro laptop with a 2.5 GHz Intel Core i7 processor and
R 3.4.3 \cite{Rproject}.

The results are presented in Tab. \ref{tab:runningtime}. We initialise
a tiling to have $n \in \{5000, 10000\}$ rows and $m \in \{50, 100\}$
columns (this gives $t_\mathrm{init}$). We then add
$k\in\{25,50,100\}$ randomly created tiles (the dimensions of the tile
are 10\% of the row and column space of the data matrix, respectively)
to the tiling and permute the data, giving $t_\mathrm{permute}$. The
maximum time needed to add a tile to a tiling (essentially running
Alg. \ref{alg:merge}) is given by $t_\mathrm{max,add}$.

We note that the initialisation time and time needed to permute the
data is negligible. The time needed to add a tile increases with
the size of the data. The method is clearly fast enough for
interactive use, since even for a dataset with a million elements
addition of a new tile and permutation of the data can be done in less
than a second. It should also be noted that large datasets can
typically be downsampled, if needed, without loss of characteristics
for visual exploration, because usually it is not possible to
visualise millions of data points anyway.

 \begin{figure*}[t]
   \centering
   \begin{tabular}{ccc}
   \includegraphics[width=.28\textwidth]{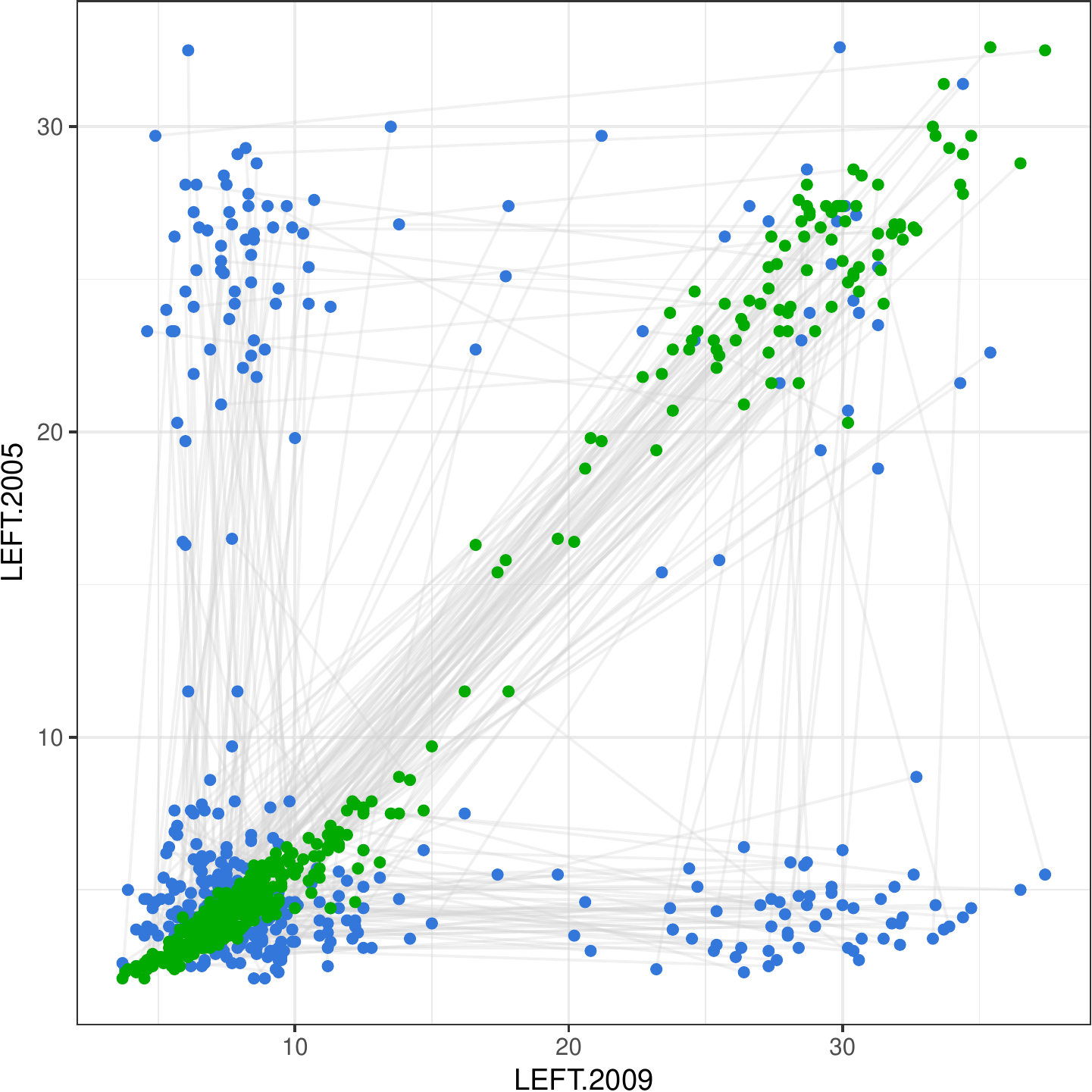} &
   \includegraphics[width=.28\textwidth]{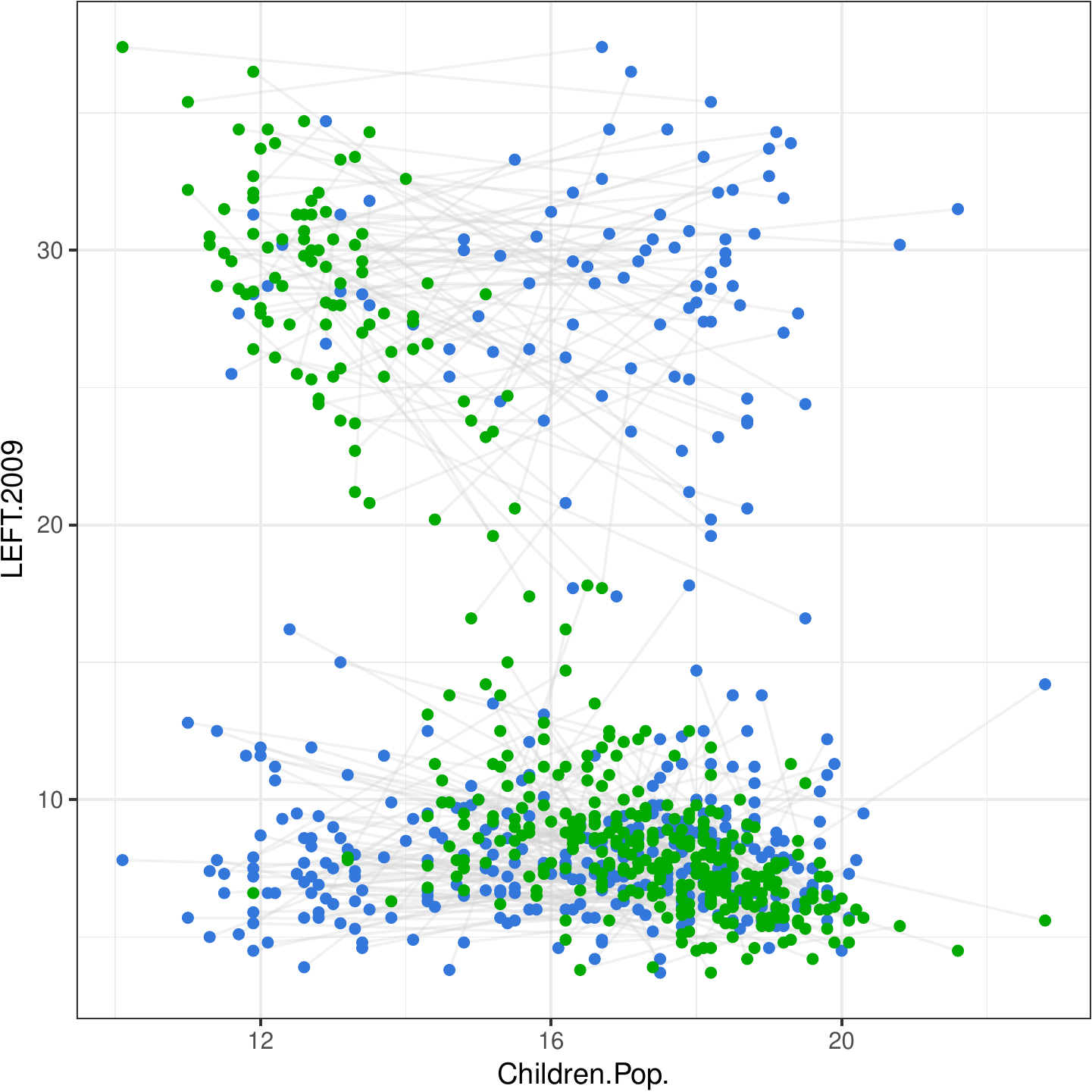} &
    \includegraphics[width=.28\textwidth]{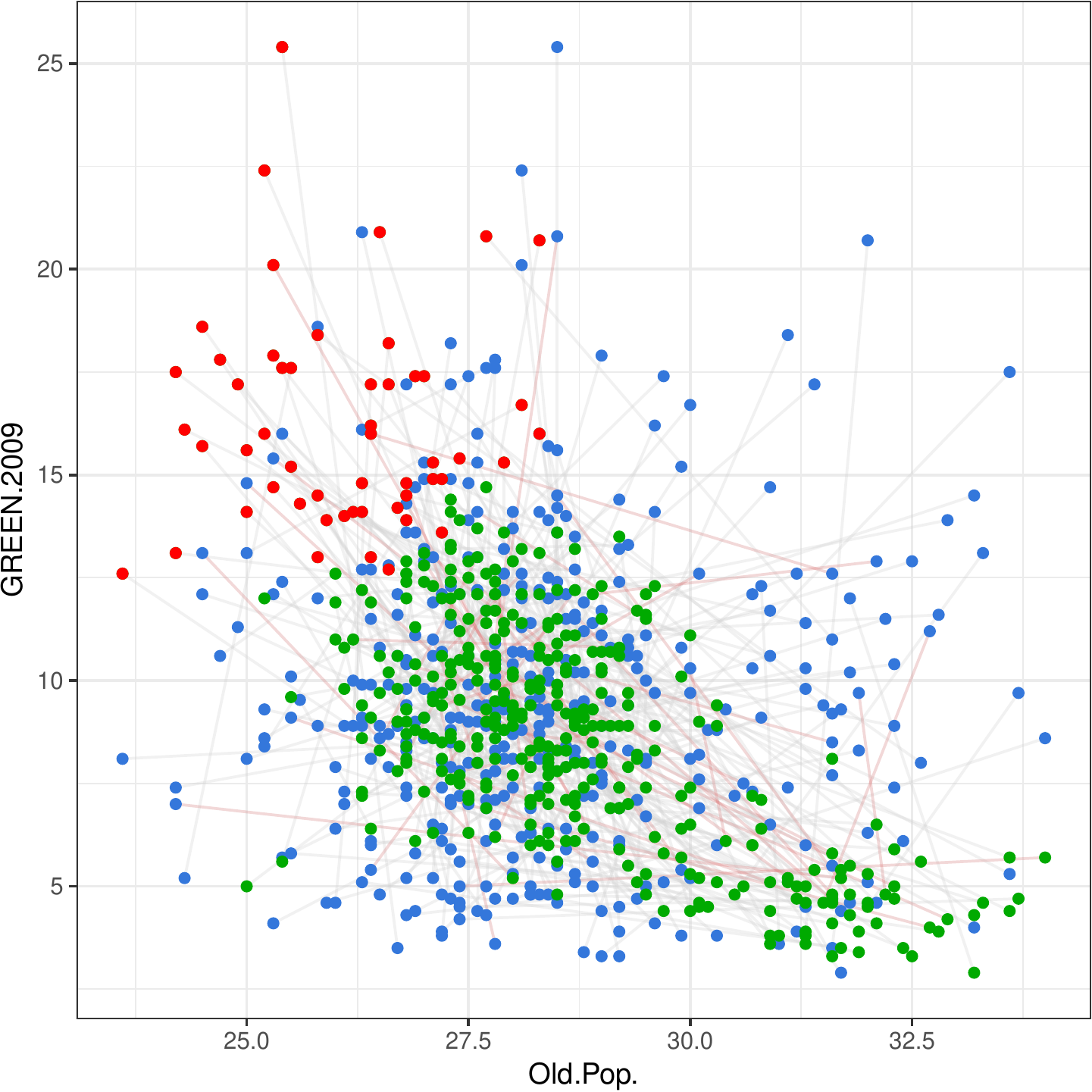} \\
            (a) & (b) & (c) 
         \end{tabular}      
  \caption{German Socio-Economic Data. (a) Task 1, the first and (b) the tenth view. (c) Task 2, the third view. \label{fig:task1}}
 \end{figure*}

\subsection{German Socio-Economic Data}
The German socio-economic dataset
\cite{boley2013,kang:2016a}\footnote{Available at
  \url{http://users.ugent.be/~bkang/software//sica/sica.zip}} consists
of records from 412 administrative districts in Germany. Each district
is represented by 46 attributes including socio-economic and political
attributes in addition to attributes such as type of the district
(rural/urban), area name/code, state, region and the geographic
coordinates of each district center. The socio-ecologic attributes
include, e.g., population density, age and education structure,
economic indicators (e.g., GDP growth, unemployment, income), and
structure of the labour market in terms of workforce in different
sectors.
The political attributes
include election results of the five major political parties (CDU/CSU,e
SPD, FDP, Green, and Left) for the German federal elections of 2005
and 2009, respectively, as well as the election participation.

\textbf{Task 1: Exploration} As we start without any prior knowledge
our first task is to explore the data using our tool to gain insight
into the main features in the data. The first view provided by the
tool is shown in Fig.~\ref{fig:task1}a.  The green points correspond
to {\sc Hypothesis 1} maintaining all correlations between attributes,
while blue points correspond to {\sc Hypothesis 2} breaking the
correlations. The locations of the actual data points, not shown in
this view, match the sample from {\sc Hypothesis 1}.  The view
reveals, not surprisingly, that there is a strong positive association
between the election results for the Left party in 2005 and 2009.  We
also see that the points in the sample from {\sc Hypothesis 1} (and
thus in the data as well) are clustered into two groups (corresponding
to different areas). We note these observations by adding two
constraints in the form of tiles that (i) connect attributes
\attr{Left 2005} and \attr{Left 2009}, and (ii) connect the points in
one of the observed clusters.  After updating the background
distributions, the samples from both hypotheses look similar, i.e.,
our constraints capture the structure of the data wrt. the attributes
in this view. Next, we request a new view.

The next view shows a strong negative association between service
 and production workforce. We add a tile constraint for all
data points and the attributes in this view and update the background
distribution. Subsequent iterations reveal further dependencies
(association between votes for the Green party (resp. SDP, CDU, FDP,
and the election participation) in 2005 and 2009, unemployment and
youth unemployment, as well as, manufacturing and production
workforce). All these views increase our understanding of the data,
although they are not very surprising.

The tenth view reveals something more interesting (Fig.~
\ref{fig:task1}b). We notice that the data points separate into two
groups, mainly on the attribute \attr{Left 2009}, though the group
with higher values of \attr{Left 2009} has lower values for
\attr{Children population}. We add a tile constraint for the set of
districts in the upper left corner, and update the background
distribution. The next view of \attr{Left 2009} and
\attr{Unemployment} shows a similar separation. We use our tool to
show the points in the tile added in the previous view and observe,
that the same sets of points are clustered also in this view.
Furthermore, the clustered sets of districts coincide almost perfectly
with the sets of clustered districts from the first view. We further
investigate this by changing the projection in the main view to
geographical coordinates, noticing that the districts in the former
East Germany together with a couple of districts in Saarland separate
in terms of more votes for the Left. Upon further analysis, it becomes
apparent that the most visible effect in the data is the division into
East and West Germany, reflected mostly in the popularity of the Left
party, but visible in, e.g., the age structure and economic
indicators, too.

\textbf{Task 2: Focusing on weaker signals} When exploring the data,
we observed several characteristics separating the former East and
West Germany. Now, we want to focus on relations between other
attributes, i.e., we want to discover relations that do not reflect
the historical division between East and West Germany.

We choose the \emph{Focus}-mode in our tool and define a focus tile
covering all districts and the following attributes: \attr{CDU 2009},
\attr{FDP 2009}, \attr{Green 2009}, \attr{SDP 2009}, \attr{Election
  participation 2009}, \attr{Population density}, all age
structure-related attributes, \attr{Income}, \attr{GDP growth 2009},
\attr{Unemployment}, and the geographical coordinates (16 attributes
in total).  Thus, we focus on the 2009 election results while leaving
the correlated 2005 elections result attributes outside the
focus. Furthermore, we leave \attr{Left 2009} outside the focus area,
due to its now known strong effect.

After defining the focus area into the tool, we obtain a view where we
observe a negative association between \attr{Unemployment} and
\attr{Children population}. We add a tile constraining this
observation and proceed. The next view shows a negative association
between \attr{Unemployment} and \attr{Election participation 2009} and
after adding a tile constraining this we proceed. The third view
(Fig.~\ref{fig:task1}c) shows \attr{Old population} and \attr{Green
  2009}. Here we see that there are roughly two kinds of outliers. In
the lower right corner districts have a larger old population and vote
less for the Green party. These correspond mainly to districts in
Eastern Germany. The other set of outliers are the districts with few
old people and voting more for the Green (the red selection in
Fig.~\ref{fig:task1}c). We add a tile constraint for both of these
outlier sets and the set of remaining districts (i.e., we partition
the districts in three sets and add a tile constraint for each set
separately). When proceeding further we are shown \attr{Middle-aged
  population} and \attr{Green 2009}.
Here we see again outliers with a high support for the Green party.
We observe that these outliers coincide almost perfectly with the
outliers with high support of the Green party from the previous
step. There is hence a set of districts characterised by a high
support of the Green party, and a large middle-aged and small old
population. Upon further inspection, we observe that these are mainly
urban districts including the largest cities in Germany.

In addition to discovering the distinction between East and West
Germany, we identified a subset of urban districts that are outliers
in terms of their support for the Green party and, e.g., the age
distribution of the population. The associations  we discovered in a straight-forward manner using visual EDA, 
are among the ones found in \cite{boley2013} with a method using pattern mining techniques. 

\subsection{UCI Image Segmentation data}
As a second use case, we consider the Image Segmentation dataset from
the UCI machine learning repository
\cite{Lichman:2013}
with 2310 samples, 19 real-valued attributes and one class attribute.

\textbf{Task 3: Focusing During Exploration} We know apriori that the
dataset contains several correlated attributes. We, however, start by
exploring the data and observe from the first view (and the
scatterplot matrix of the tool) that there is a set of 330 points
(belonging to the class \attr{sky}) in the upper-right corner clearly
separated in terms of \attr{value}, \attr{B}, \attr{G}, \attr{R}, and
\attr{Intensity} (Fig.~\ref{fig:task3}a). We add a tile constraining
all these attributes and proceed. The next view shows us that, in
fact, the same set of points also cluster in terms of \attr{value} and
\attr{excess-R}. We add another tile constraining this.

Having defined two tile constraints for the set of points from the
class \attr{sky}, we want to focus our exploration on other points,
while keeping the already defined tiles. Thus, we define a hypothesis
tiling as follows. We select the 1980 points outside the class
\attr{sky} as our focus rows $R$. For the attributes, we choose a
grouping $C=C_1\cup C_2\cup C_3$ in which the attributes related to
RGB-values are grouped into a single group $C_1=\{\attr{value},
\attr{B}, \attr{G}, \attr{R}, \attr{Intensity}, \attr{excess-B},$
$\attr{excess-G}, \attr{excess-R}\}$, and $C_2=\{\attr{saturation}\}$
and $C_3=\{\attr{hue}\}$ form individual groups.
After updating our hypotheses, we are shown the view with attributes
\attr{hue} and \attr{excess-G} (Fig.~\ref{fig:task3}b). In this view,
the set of points in the upper right (belonging to the class
\attr{grass}) form a clear cluster, and 48 points from the classes
\attr{window} and \attr{foliage} are outliers with \attr{hue} at 0 in
the middle. We add tile constraints for these sets of points, update
the background distributions and request a new view.  The next view
then (Fig.~\ref{fig:task3}c) shows further structure in terms of
\attr{saturation} and \attr{R}.

 \begin{figure*}[t]
   \centering
   \begin{tabular}{ccc}
   \includegraphics[width=.28\textwidth]{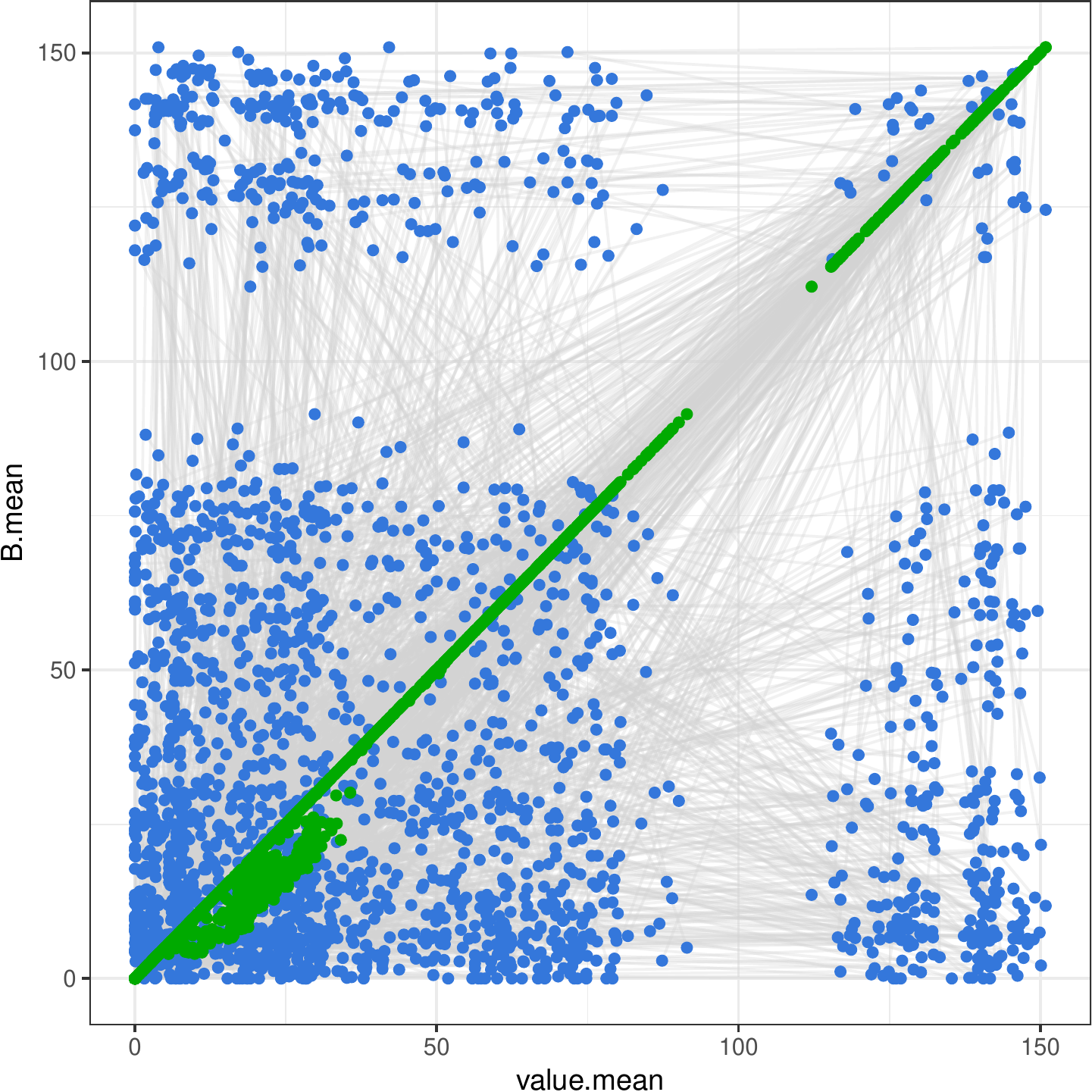} &
   \includegraphics[width=.28\textwidth]{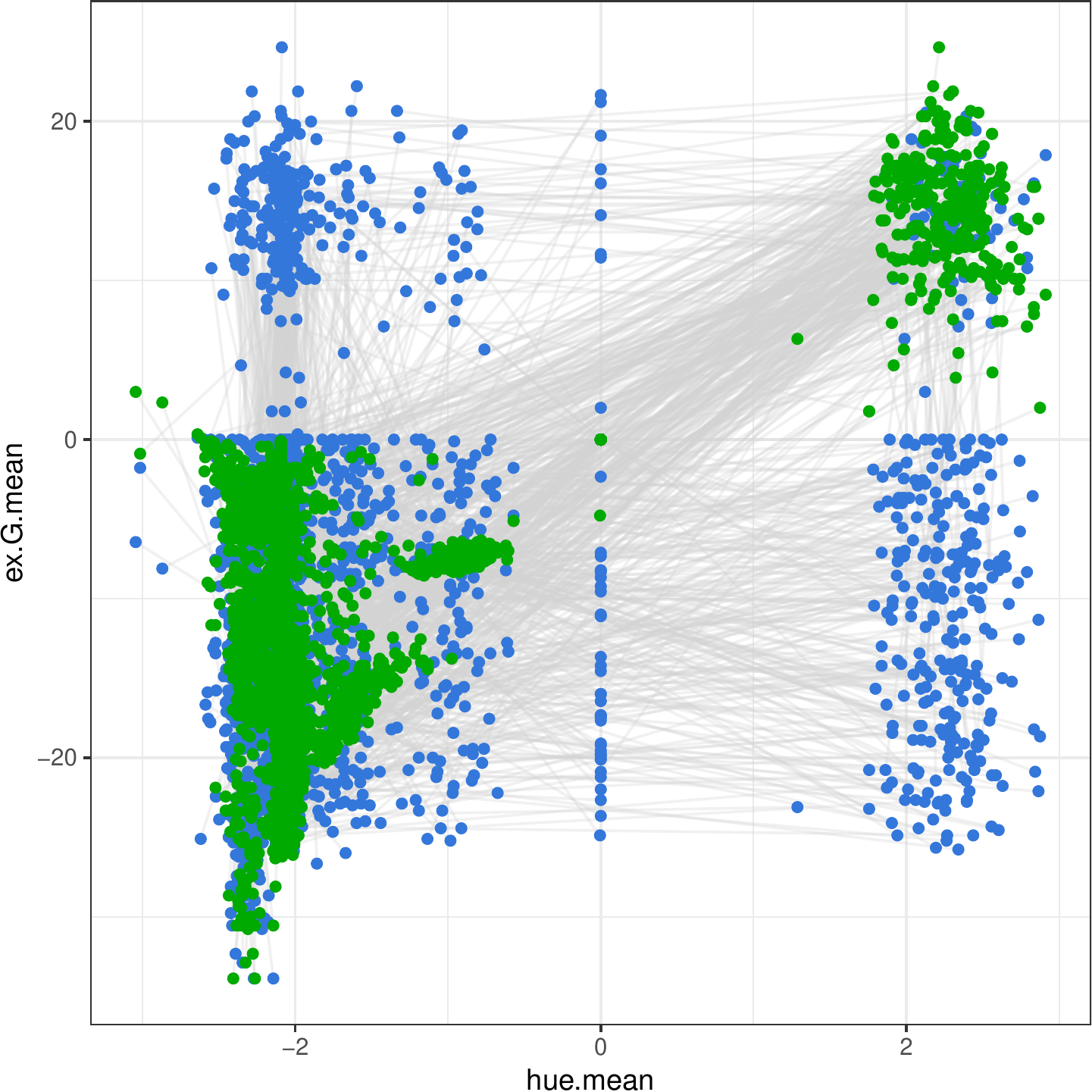} &
      \includegraphics[width=.28\textwidth]{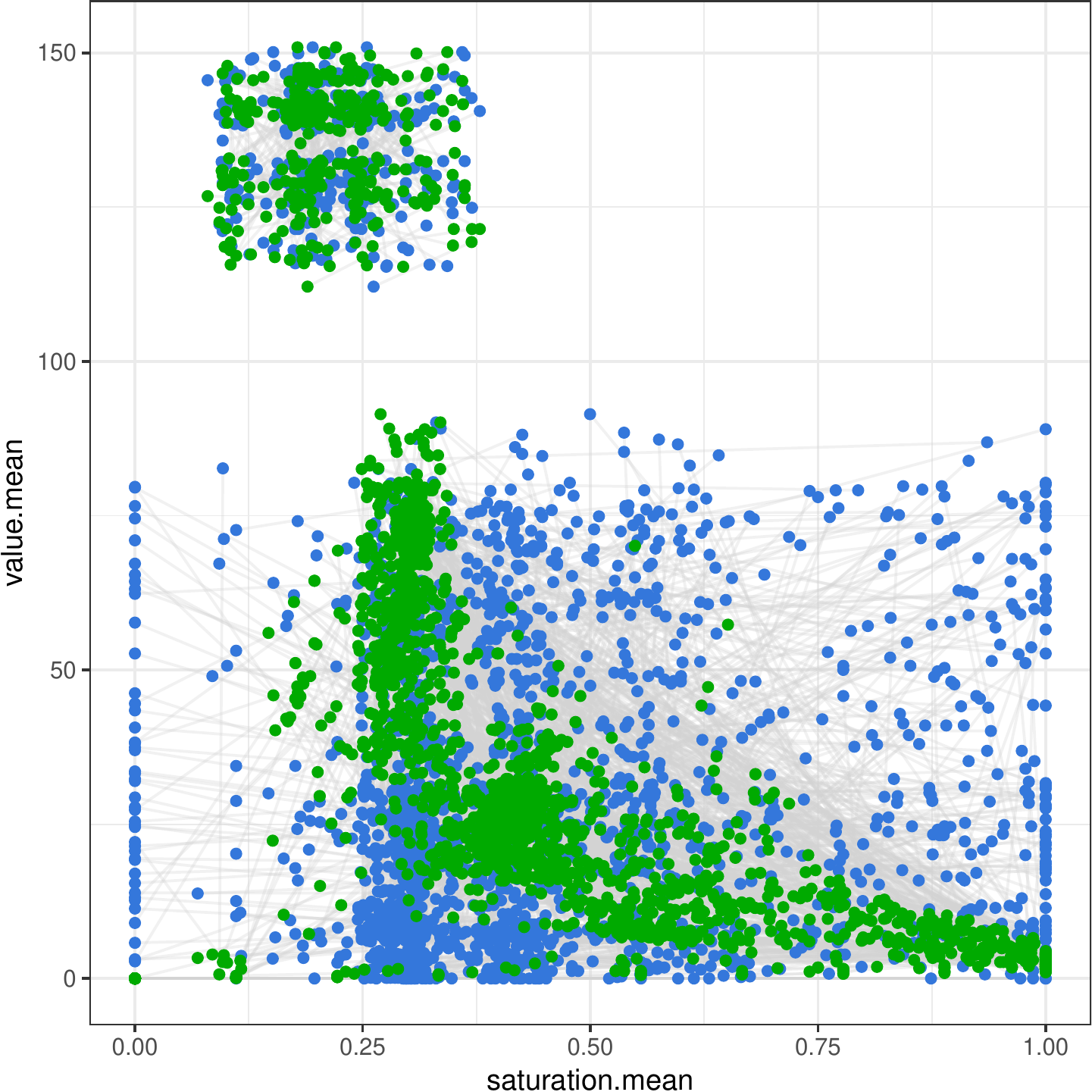} \\
      (a)  & (b) & (c) 
   \end{tabular}       
  \caption{Segmentation data. (a) Exploration-mode, the first view. (b) Comparison-mode, the first and (c) the second view. \label{fig:task3}}
 \end{figure*}

\section{Conclusions}\label{sec:conclusions}
In this paper we presented the \emph{Human-Guided Data Exploration} (HGDE)
framework,  generalising and extending previous research in the
field of iterative data mining.

The user can
direct the exploration process to relations of interest, so that
specific questions (posed as hypotheses in the framework) can be
answered. This places the iterative data mining process on an informed
basis, as the user is able to steer the process and can hence better
anticipate where the process leads. This is in contrast to prior work
where no control was possible and the data mining process was
unpredictable. It should be noted that sometimes we do want to be
surprised, though, 
and this \emph{uninformed exploration} is indeed a special
case of the  general framework presented here.

The HGDE framework is based on constrained randomisation of the data,
and the knowledge of the user is communicated by constraints on
permutations of the data. We formalise these constraints, describing
relations between the data items and the data attributes, as tiles
(subsets of rows and columns). The tiles are conceptually simple and
they are parametrised by a subset of rows and attributes of the data:
a tile means that the user has absorbed the relations of the data
points in the respective attributes. We demonstrated empirically that
our method is feasible for interactive use also with large datasets.

Although the constrained permutation scheme is conceptually simple, it
is also computationally efficient, which is an advantage over other
modelling techniques in many cases. Also, permutation of the data
using the scheme used here is a non-parametric method with very
few distributional assumptions, keeping the column margins intact (since no
new values are introduced, they are only re-ordered). This ensures
that the user is not surprised due to spurious modelling errors,
instead, 
any observed structure (or lack thereof) relates to the internal
relationships between the attributes. A further advantage of the
permutation scheme is that it is agnostic with regards to the domain
of the data attributes, i.e., since the data items are shuffled
within each column it does not matter what the domains of the columns
are.

To showcase the interactive use of the method, we developed a free
open-source tool to demonstrate these concepts, using which we also performed the
empirical evaluation in this paper. Using real-world datasets, we
demonstrated the utility of being able to focus the data exploration
process, so that novel aspects of the data are revealed, taking into
account those features of the data  the user has already
assimilated.

The HGDE framework presented in this paper, and realised in the form
of the tool, has several potential applications in explorative data
analysis tasks.  Explorative data analysis is crucial for successful
data analysis and mining. The framework presented here provides a
simple and functional approach to allowing the user to direct his or
her interest towards answering complex hypotheses concerning the
data. While there are methods for the same objectives, ours is the
first to present the task of exploration in a principled way such
that: (i) the user's background knowledge is modelled and the user is
always shown informative views, (ii) the user can ask specific
questions of the data using the same framework, and (iii) the problem
is solvable using an efficient permutation sampling methodology, as
demonstrated by the open source application published in conjunction
with this paper.

The power of human-guided data exploration stems from the fact that
typical data sets have a huge number of interesting patterns. However,
the patterns that are interesting for a user depend on the task at
hand. A non-interactive data mining method is therefore restricted to
either show generic features of data (which may be obvious to an
expert) or then output unusably many patterns (a typical problem,
e.g., in frequent pattern mining: there are easily too many patterns
for the user to absorb). Our framework is a solution to this problem:
by integrating the human background knowledge and focus---formulated
as a mathematically defined hypothesis---we can at the same time guide
the search towards topics interesting to the user at any particular
moment and take the user's prior knowledge into account in an
understandable and efficient way.

While we have demonstrated our approach on real-valued and categorical
data sets and the views used are scatterplot over pairs of attributes,
the approach is generic and can be used for different data types and
different views of the data. An interesting future problem would be to
generalise this work to other data types, such as time series,
networks etc. It would also be interesting to study how these
approaches could be used to explore the model spaces of machine
learning algorithms such as classifiers \cite{henelius2014}.

\textbf{Acknowledgements}
This work has been supported by the Academy of Finland (decisions 288814 and 313513).

\bibliographystyle{abbrv}
\bibliography{permutation_tiles}

\end{document}